\newtheorem{lemma}{Lemma}
\newtheorem{definition}{Definition}
\newtheorem{remark}{Remark}
\newtheorem{theorem}{Theorem}
\definecolor{yl}{rgb}{1,0,0}
\newcommand{\X}{{\bf X}}
\newcommand{\Y}{{\bf Y}}
\newcommand{\Z}{{\bf Z}}
\newcommand{\R}{{\bf R}}
\newcommand{\Hc}{{\bf H}}
\newcommand{\D}{{\bf D}}
\newcommand{\W}{{\bf W}}
\newcommand{\U}{{\bf U}}
\newcommand{\V}{{\bf V}}
\newcommand{\C}{{\bf C}}
\newcommand{\Px}{{\bf \Phi_{x}}}
\newcommand{\Py}{{\bf \Phi_{y}}}
\newcommand{\G}{{\bf G}}
\newcommand{\A}{{\bf A}}
\newcommand{\li}{LING }
\newcommand{\dc}{ D-CCA }
\newcommand{\lc}{ L-CCA }
\newcommand{\gc}{ G-CCA }
\newcommand{\rpc}{ RPCCA }
\newcommand{\kp}{{k_{\text{pc}}}}
\newcommand{\kr}{{k_{\text{rpcca}}}}
\newcommand{\kc}{{k_{\text{cca}}}}
\newcommand{\Wx}{\C_{xx}^{-\frac{1}{2}}}
\newcommand{\Wy}{\C_{yy}^{-\frac{1}{2}}}
\newcommand{\di}{\text{dist}}
\title{Large Scale Canonical Correlation Analysis with Iterative Least Squares}
\author{
Yichao Lu \\
University of Pennsylvania\\
\texttt{yichaolu@wharton.upenn.edu}
\And
Dean P. Foster \\
Yahoo Labs, NYC \\
\texttt{dean@foster.net}
}
\begin{document} 

\maketitle

\begin{abstract}
Canonical Correlation Analysis (CCA) is a widely used statistical tool with both well established  theory and favorable performance for a wide range of machine learning problems. However, computing CCA for huge datasets can be very slow since it involves implementing QR decomposition or singular value decomposition of huge matrices. In this paper we introduce  \lc, a iterative  algorithm which can compute CCA fast on huge sparse datasets. Theory on both the asymptotic convergence and finite time accuracy of \lc are established. The experiments also show that \lc outperform other fast CCA approximation schemes on two real datasets.
\end{abstract}
\section{Introduction}
Canonical Correlation Analysis (CCA) is a widely used spectrum method for finding correlation structures in multi-view datasets introduced by \cite{hotelling36}. Recently, \cite{bach05,foster08,kakade07} proved that CCA is able to find the right latent structure under certain hidden state model. For modern machine learning problems, CCA has already been successfully used as a dimensionality reduction technique for the multi-view setting. For example, A CCA between the text description and image of the same object will find common structures between the two different views, which generates a natural vector representation of the object. In \cite{foster08}, CCA is performed on a large unlabeled dataset in order to generate low dimensional features to a regression problem where the size of labeled dataset is small. In \cite{dhillon11, dhillon12} a CCA between words and its context is implemented on several large corpora to generate low dimensional vector representations of words which captures useful semantic features.\\

When the data matrices are small, the classical algorithm for computing CCA involves first a QR decomposition of the data matrices which pre whitens the data and then a Singular Value Decomposition (SVD) of the whitened covariance matrix as introduced in \cite{golub92}. This is exactly how Matlab computes CCA. But for huge datasets this procedure becomes extremely slow. For data matrices with huge sample size \cite{avron13} proposed a fast CCA approach based on a fast inner product preserving random projection called Subsampled Randomized Hadamard Transform but it's still slow for datasets with a huge number of features. In this paper we introduce a fast algorithm for finding the top $\kc$ canonical variables from huge sparse data matrices (a single multiplication with these sparse matrices is very fast) $\X\in n\times p_1$ and $\Y\in n\times p_2$ the rows of which are i.i.d samples from a pair of random vectors. Here $n\gg p_1,p_2\gg 1$ and $\kc$ is relatively small number like $50$ since the primary goal of CCA is to generate low dimensional features. Under this set up, QR decomposition of a $n\times p$ matrix cost $O(np^2)$ which is extremely slow even if the matrix is sparse. On the other hand since the data matrices are sparse, $\X^{\top}\X$ and $\Y^{\top}\Y$ can be computed very fast. So another whitening strategy is to compute $(\X^{\top}\X)^{-\frac{1}{2}},(\Y^{\top}\Y)^{-\frac{1}{2}}$. But when $p_1, p_2$ are large this takes $O(\max\{p_1^3, p_2^3\})$ which is both slow and numerically unstable.\\

The main contribution of this paper is a fast iterative algorithm \lc consists of only QR decomposition of relatively small matrices and a couple of matrix multiplications which only involves huge sparse matrices or small dense matrices. This is achieved by reducing the computation of CCA to a sequence of fast Least Square iterations. It is proved that \lc asymptotically converges to the exact CCA solution and error analysis for finite iterations is also provided. As shown by the experiments, \lc also has favorable performance on real datasets when compared with other CCA approximations given a fixed CPU time.\\

It's worth pointing out that approximating CCA is much more challenging than SVD(or PCA). As suggested by \cite{tropp11,halko11}, to approximate the top singular vectors of $\X$, it suffices to randomly sample a small subspace in the span of $\X$ and some power iteration with this small subspace will automatically converge to the directions with top singular values. On the other hand CCA has to search through the whole $\X$ $\Y$ span in order to capture directions with large correlation. For example, when the most correlated directions happen to live in the bottom singular vectors of the data matrices, the random sample scheme will miss them completely. On the other hand, what \lc algorithm doing intuitively is running an exact search of correlation structures on the top singular vectors and an fast gradient based approximation on the remaining directions.\\

\section{Background: Canonical Correlation Analysis}
\subsection{Definition}
Canonical Correlation Analysis (CCA) can be defined in many different ways. Here we use the definition in \cite{foster08, kakade07} since this version naturally connects CCA with the Singular Value Decomposition (SVD) of the whitened covariance matrix, which is the key to understanding our algorithm. 
\begin{definition}
Let $\X\in n\times p_1$ and $\Y\in n\times p_2$ where the rows are i.i.d samples from a pair of random vectors. Let $\Px \in p_1\times p_1, \Py \in p_2\times p_2$ and use $\phi_{x,i},\phi_{y,j}$ to denote the columns of $\Px, \Py$ respectively. $\X\phi_{x,i}, \Y\phi_{y,j}$ are called canonical variables if 
\begin{eqnarray*}
\phi_{x,i}^{\top}\X^{\top} \Y\phi_{y,j}&=&\begin{cases} d_i &\mbox{if} \quad  i=j\\ 0&\mbox{if} \quad i\ne j\end{cases}\\
\phi_{x,i}^{\top}\X^{\top} \X \phi_{x,j}=\begin{cases} 1 &\mbox{if} \quad i=j\\ 0&\mbox{if} \quad i\ne j\end{cases}&&
\phi_{y,i}^{\top}\Y^{\top} \Y\phi_{y,j}=\begin{cases} 1 &\mbox{if} \quad i=j\\ 0&\mbox{if} \quad i\ne j\end{cases}
\end{eqnarray*}
$\X\phi_{x,i}, \Y\phi_{y,i}$ is the $i^{\text{th}}$ pair of canonical variables and $d_i$ is the $i^{\text{th}}$ canonical correlation.
\label{def1}
\end{definition}

\subsection{CCA and SVD}
First introduce some notation. Let 
\begin{equation*}
\C_{xx}=\X^{\top}\X \quad \C_{yy}=\Y^{\top}\Y \quad \C_{xy}=\X^{\top}\Y
\end{equation*}
For simplicity assume $\C_{xx}$ and $\C_{yy}$ are full rank and Let
\[\tilde{\C}_{xy}=\Wx \C_{xy}\Wy\]
The following lemma provides a way to compute the canonical variables by SVD.
\begin{lemma}
Let $\tilde{\C}_{xy}=\U\D\V^{\top}$ be the SVD of $\tilde{\C}_{xy}$ where $u_i,v_j$ denote the left, right singular vectors and $d_i$ denotes the singular values. Then $\X\Wx u_i$, $\Y\Wy v_j$ are the canonical variables of the $\X$, $\Y$ space respectively.
\label{lesvd}
\end{lemma}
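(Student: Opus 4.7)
The plan is to verify the claim by direct substitution: plug the candidate vectors $\phi_{x,i}=\Wx u_i$ and $\phi_{y,j}=\Wy v_j$ into each of the three defining identities of Definition \ref{def1}, and reduce each side using only the symmetry of $\Wx,\Wy$ together with the orthonormality of the singular vectors of $\tilde{\C}_{xy}$. No deeper machinery is needed; the lemma is essentially an algebraic rewrite that makes explicit why CCA is an SVD problem in whitened coordinates.

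First I would handle the two normalization/orthogonality conditions. For the $\X$ side,
\[
\phi_{x,i}^{\top}\X^{\top}\X\phi_{x,j}=u_i^{\top}\Wx\C_{xx}\Wx u_j.
\]
Because $\Wx=\C_{xx}^{-1/2}$ is symmetric and $\Wx\C_{xx}\Wx=I$, this collapses to $u_i^{\top}u_j=\delta_{ij}$, which is exactly the required identity. The same manipulation with $\Wy$ in place of $\Wx$ and $\C_{yy}$ in place of $\C_{xx}$ takes care of the $\Y$-side orthogonality condition.

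Next I would verify the off-diagonal/correlation condition. Substituting gives
\[
\phi_{x,i}^{\top}\X^{\top}\Y\phi_{y,j}=u_i^{\top}\Wx\C_{xy}\Wy v_j=u_i^{\top}\tilde{\C}_{xy}v_j,
\]
and applying the SVD $\tilde{\C}_{xy}=\U\D\V^{\top}$ together with $\U^{\top}\U=I$ and $\V^{\top}\V=I$ yields $d_i\delta_{ij}$, matching Definition \ref{def1} with $d_i$ in the lemma playing the role of the $i$-th canonical correlation.

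There is no real obstacle here; the only points to be careful about are that the full-rank assumption on $\C_{xx},\C_{yy}$ is what lets $\Wx,\Wy$ be well-defined symmetric inverses of the square roots, and that the index $i$ should range only up to $\min(p_1,p_2)$ so that both $u_i$ and $v_i$ exist in the (economy) SVD of $\tilde{\C}_{xy}$. With these minor bookkeeping points noted, the three calculations above give the lemma.
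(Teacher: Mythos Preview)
Your proposal is correct and follows exactly the same approach as the paper, which simply says to plug $\X\Wx u_i$ and $\Y\Wy v_j$ into the equations of Definition~\ref{def1}. You have merely written out the three verifications explicitly, which is fine.
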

\begin{proof}
Plug $\X\Wx u_i$, $\Y\Wy v_j$ into the equations in Definition \ref{def1} directly proves lemma \ref{lesvd}
\end{proof}

As mentioned before, we are interested in computing the top $\kc$ canonical variables where $\kc\ll p_1, p_2$. Use $\U_1, \V_1$ to denote the first $\kc$ columns of $\U, \V$ respectively and use $\U_2,\V_2$ for the remaining columns. By lemma \ref{lesvd}, the top $\kc$ canonical variables can be represented by $\X\Wx\U_1$ and $\Y\Wy\V_1$.

\section{Compute CCA by Iterative Least Squares}
Since the top canonical variables are connected with the top singular vectors of $\tilde{\C}_{xy}$ which can be compute with orthogonal iteration \cite{golub96} (it's called simultaneous iteration in \cite{trefethen97}), we can also compute CCA iteratively. A detailed algorithm is presented in Algorithm\ref{al1}:

\begin{algorithm}[tb]
\caption{CCA via Iterative LS}
\begin{algorithmic}
\STATE
{\bf Input :}  Data matrix $\X\in n\times p_1$ ,$\Y \in n\times p_2$. A target dimension $\kc$. Number of orthogonal iterations $t_1$
\STATE
{\bf Output :}  $\X_\kc\in n\times \kc$, $\Y_\kc \in n\times \kc$ consist of top $\kc$ canonical variables of $\X$ and $\Y$.

\STATE 1.Generate a $p_1\times \kc$ dimensional random matrix $\G$ with i.i.d standard normal entries.
\STATE 2.Let $\X_0=\X\G$

3. \FOR {$t=1$ {\bfseries to} $t_1$}
\STATE $\Y_t=\Hc_{\Y}\X_{t-1}$  where $\Hc_{\Y}=\Y(\Y^{\top}\Y)^{-1}\Y^{\top}$

\STATE $\X_t=\Hc_{\X}\Y_{t}$ where $\Hc_{\X}=\X(\X^{\top}\X)^{-1}\X^{\top}$
\ENDFOR
\STATE 4.$\X_\kc=\text{QR}(\X_{t_1}), \Y_\kc=\text{QR}(\Y_{t_1})$ 
\STATE Function $\text{QR}(\X_t)$ extract an orthonormal basis of the column space of $\X_t$ with QR decomposition

\end{algorithmic}
\label{al1}
\end{algorithm}

The convergence result of Algorithm \ref{al1} is stated in the following theorem:
\begin{theorem}
Assume $|d_1|>|d_2|>|d_3|...>|d_{\kc+1}|$ and $\U_1^{\top}\C_{xx}^{\frac{1}{2}}\G$ is non singular (this will hold with probability 1 if the elements of $\G$ are i.i.d Gaussian). The columns of $\X_\kc$ and $\Y_\kc$ will converge to the top $\kc$ canonical variables of $\X$ and $\Y$ respectively if $t_1\rightarrow \infty$.
\label{tm1}
\end{theorem}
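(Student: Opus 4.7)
The plan is to recognize Algorithm \ref{al1} as disguised orthogonal (subspace) iteration on the symmetric matrix $\U\D^2\U^\top$, and then combine the classical convergence guarantee for orthogonal iteration with Lemma \ref{lesvd}.

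First, I would track the algorithm in coordinates. Writing $\X_t=\X\A_t$ and $\Y_t=\Y\bm{\beta}_t$, the two projection steps simplify to
\[
\bm{\beta}_t = \C_{yy}^{-1}\C_{yx}\A_{t-1},\qquad \A_t = \C_{xx}^{-1}\C_{xy}\bm{\beta}_t,
\]
so $\A_t = (\C_{xx}^{-1}\C_{xy}\C_{yy}^{-1}\C_{yx})^t\G$. Next I would whiten via $\A_t=\Wx\B_t$ and $\bm{\beta}_t=\Wy\bm{\gamma}_t$. Using $\tilde\C_{xy}=\Wx\C_{xy}\Wy=\U\D\V^\top$, the $\A$-recursion conjugates to
\[
\B_t = (\U\D^2\U^\top)\B_{t-1},\qquad \B_0=\C_{xx}^{1/2}\G,
\]
that is, $\B_t = \U\D^{2t}\U^\top \C_{xx}^{1/2}\G$, and similarly $\bm{\gamma}_t=\tilde\C_{xy}^\top\B_{t-1}=\V\D^{2t-1}\U^\top\C_{xx}^{1/2}\G$.

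Second, I would invoke the classical convergence theorem for orthogonal iteration (see \cite{golub96,trefethen97}). The eigen-gap $d_\kc^2>d_{\kc+1}^2$ is immediate from $|d_\kc|>|d_{\kc+1}|$, and the paper's hypothesis that $\U_1^\top\C_{xx}^{1/2}\G$ be nonsingular is exactly the non-degenerate-overlap condition of that theorem (it holds almost surely for Gaussian $\G$, since $\U_1^\top\C_{xx}^{1/2}$ is a full row-rank $\kc\times p_1$ matrix). Consequently the column span of $\B_t$ converges to $\text{span}(\U_1)$ at geometric rate $(d_{\kc+1}/d_\kc)^{2t}$, and likewise the column span of $\bm{\gamma}_t$ converges to $\text{span}(\V_1)$. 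The QR step only orthogonalizes and does not alter the column span, so
\[
\text{span}(\X_\kc)\to\text{span}(\X\Wx\U_1),\qquad \text{span}(\Y_\kc)\to\text{span}(\Y\Wy\V_1),
\]
which by Lemma \ref{lesvd} are exactly the spans of the top $\kc$ canonical variables of $\X$ and $\Y$.

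I expect the main piece of technical care, rather than any genuine obstacle, to lie in verifying the algebraic identity $\Wx\C_{xy}\C_{yy}^{-1}\C_{yx}\Wx=\tilde\C_{xy}\tilde\C_{xy}^\top=\U\D^2\U^\top$ that turns our recursion into orthogonal iteration on a symmetric positive semidefinite matrix, and in matching the stated non-singularity hypothesis on $\U_1^\top\C_{xx}^{1/2}\G$ with the non-degenerate-overlap hypothesis in the textbook convergence statement. Once these are in place, the conclusion is a direct appeal to the orthogonal-iteration theorem followed by Lemma \ref{lesvd}.
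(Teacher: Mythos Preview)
Your proposal is correct and matches the paper's approach exactly: the paper states that Theorem~\ref{tm1} is proved by showing Algorithm~\ref{al1} is essentially orthogonal iteration for the top $\kc$ eigenvectors of $\A=\tilde{\C}_{xy}\tilde{\C}_{xy}^{\top}$, and your whitened recursion $\B_t=(\U\D^2\U^\top)\B_{t-1}$ with $\B_0=\C_{xx}^{1/2}\G$ is precisely that reduction, with the non-singularity of $\U_1^\top\C_{xx}^{1/2}\G$ playing the role of the standard overlap hypothesis.
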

Theorem \ref{tm1} is proved by showing it's essentially an orthogonal iteration \cite{golub96,trefethen97} for computing the top $\kc$ eigenvectors of $\A=\tilde{\C}_{xy}\tilde{\C}_{xy}^{\top}$. A detailed proof is provided in the supplementary materials.\\
\subsection{A Special Case}
\label{dc}
When $\X$ $\Y$ are sparse and $\C_{xx},\C_{yy}$ are diagonal (like the Penn Tree Bank dataset in the experiments), Algorithm \ref{al1} can be implemented extremely fast since we only need to multiply with sparse matrices or inverting huge but diagonal matrices in every iteration. QR decomposition is performed not only in the end but after every iteration for numerical stability issues (here we only need to QR with matrices much smaller than $\X,\Y$). We call this fast version \dc in the following discussions.\\
When $\C_{xx}, \C_{yy}$ aren't diagonal, computing matrix inverse becomes very slow. But we can still run \dc by approximating $(\X^{\top}\X)^{-1},(\Y^{\top}\Y)^{-1}$ with $(\text{diag}(\X^{\top}\X))^{-1},(\text{diag}(\Y^{\top}\Y))^{-1}$ in algorithm \ref{al1} when speed is a concern. But this leads to poor performance when $\C_{xx},\C_{yy}$ are far from diagonal as shown by the URL dataset in the experiments.
 
\subsection{General Case}
\label{lc}
Algorithm \ref{al1} reduces the problem of CCA to a sequence of iterative least square problems. When $\X, \Y$ are huge, solving LS exactly is still slow since it consists inverting a huge matrix but fast LS methods are relatively well studied. There are many ways to approximate the LS solution by optimization based methods like Gradient Descent \cite{marina07, lu14}, Stochastic Gradient Descent \cite{johnson13,bottou10} or by random projection and subsampling based methods like \cite{drineas07,dhillon13}. A fast approximation to the top $\kc$ canonical variables can be obtained by replacing the exact LS solution in every iteration of Algorithm \ref{al1} with a fast approximation. Here we choose LING \cite{lu14} which works well for large sparse design matrices for solving the LS problem in every CCA iteration.\\
The connection between CCA and LS has been developed under different setups for different purposes. \cite{sun08} shows that CCA in multi label classification setting can be formulated as an LS problem. \cite{via07} also formulates CCA as a recursive LS problem and builds an online version based on this observation. The benefit we take from this iterative LS formulation is that running a fast LS approximation in every iteration will give us a fast CCA approximation with both provable theoretical guarantees and favorable experimental performance.

\section{Algorithm}
In this section we introduce \lc which is a fast CCA algorithm based on Algorithm \ref{al1}. 
\subsection{LING: a Gradient Based Least Square Algorithm}
First we need to introduce the fast LS algorithm \li as mentioned in section \ref{lc} which is used in every orthogonal iteration of \lc.\\
Consider the LS problem:
\[\beta^*=\arg\min_{\beta\in\mathbb{R}^p}\{\|X\beta-Y\|^2\}\] for $X\in n\times p$ and $Y \in n\times 1$. For simplicity assume $X$ is full rank. $X\beta^*=X(X^{\top}X)^{-1}X^{\top}Y$ is the projection of $Y$ onto the column space of $X$. In this section we introduce a fast algorithm \li to approximately compute $X\beta^*$ without formulating $(X^{\top}X)^{-1}$ explicitly which is slow for large $p$. The intuition of \li is as follows. Let $U_1\in n\times \kp$ ($\kp\ll p$) be the top $\kp$ left singular vectors of $X$ and $U_2 \in n\times (p-\kp)$ be the remaining singular vectors.  In \li we decompose $X\beta^*$ into two orthogonal components,
\[X\beta^*=U_1U_1^{\top}Y+U_2U_2^{\top}Y\] 
the projection of $Y$ onto the span of $U_1$ and the projection onto the span of $U_2$. The first term can be computed fast given $U_1$ since $\kp$ is small. $U_1$ can also be computed fast approximately with the randomized SVD algorithm introduced in \cite{tropp11} which only requires a few fast matrix multiplication and a QR decomposition of $n\times \kp$ matrix. The details for finding $U_1$ are illustrated in the supplementary materials. Let $Y_r=Y-U_1U_1^{\top}Y$ be the residual of $Y$ after projecting onto $U_1$. For the second term, we compute it by solving the optimization problem
\[\min_{\beta_r\in\mathbb{R}^p}\{\|X\beta_r-Y_r\|^2\}\] 
with Gradient Descent (GD) which is also described in detail in the supplementary materials. A detailed description of \li are presented in Algorithm \ref{al2}. \\
In the above discussion $Y$ is a column vector. It is straightforward to generalize \li to fit into Algorithm \ref{al1} where $Y$ have multiple columns by applying Algorithm \ref{al2} to every column of $Y$. \\
In the following discussions, we use $\text{\li}(Y,X,\kp,t_2)$ to denote the \li output with corresponding inputs which is an approximation to $X(X^{\top}X)^{-1}X^{\top}Y$.

\begin{algorithm}[tb]
\caption{LING}
\begin{algorithmic}
\STATE
{\bf Input :} $X\in n\times p$ ,$Y \in n\times 1$. $\kp$, number of top left singular vectors selected. $t_2$, number of iterations in Gradient Descent. 
\STATE
{\bf Output :}  $\hat{Y}\in n\times 1$, which is an approximation to $X(X^{\top}X)^{-1}X^{\top}Y$

\STATE 1. Compute $U_1\in n\times \kp$, top $\kp$ left singular vector of $X$ by randomized SVD (See supplementary materials for detailed description).

\STATE 2. $Y_1=U_1U_1^{\top}X$.

\STATE 3.Compute the residual. $Y_{r}=Y-Y_1$

\STATE 4.Use gradient descent initial at the $0$ vector (see supplementary materials for detailed description) to approximately solve the LS problem $\min_{\beta_r \in \mathcal{R}^p}\|X\beta_r-Y_r\|^2$. Use $\beta_{r,t_2}$ to denote the solution after $t_2$ gradient iterations.

\STATE 5. $\hat{Y}=Y_1+ X\beta_{r,t_2}$.\\

\end{algorithmic}
\label{al2}
\end{algorithm}


The following theorem gives error bound of \li.
\begin{theorem}
Use $\lambda_i$ to denote the $i^{\text{th}}$ singular value of $X$.
Consider the LS problem 
\[\min_{\beta\in\mathbb{R}^p}\{\|X\beta-Y\|^2\}\] for $X\in n\times p$ and $Y \in n\times 1$. Let $Y^*=X(X^{\top}X)^{-1}X^{\top}Y$ be the projection of $Y$ onto the column space of $X$ and $\hat{Y}_{t_2}=\text{\li}(Y,X,\kp,t_2)$.  Then 
\begin{equation}
\|Y^*-\hat{Y}_{t_2}\|^2\le C r^{2t_2}
\label{licon}
\end{equation}
for some constant $C>0$ and $r=\frac{\lambda_{\kp+1}^2-\lambda_p^2}{\lambda_{\kp+1}^2+\lambda_p^2}<1$
\label{thm2}
\end{theorem}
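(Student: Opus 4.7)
The plan is to exploit the orthogonal decomposition $Y^* = U_1 U_1^{\top} Y + U_2 U_2^{\top} Y$ coming from the SVD of $X$. Since step 2 of \li matches the first term exactly via $Y_1 = U_1 U_1^{\top} Y$, the entire error is controlled by how well the gradient descent phase recovers the second term. First I would verify that the residual $Y_r = (I - U_1 U_1^{\top}) Y$ lies in the column span of $U_2 \subseteq \text{col}(X)$, so the exact minimizer of $\|X \beta_r - Y_r\|^2$ attains zero residual, i.e.\ $X \beta^*_r = Y_r$. Consequently $Y^* - \hat Y_{t_2} = Y_r - X \beta_{r, t_2}$, and the whole task reduces to bounding the GD fit error for a problem whose optimal residual vanishes.

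Next I would set up the GD recursion. Starting from $\beta_0 = 0$ with step size $\eta$, we have $\beta_{t+1} = \beta_t - \eta X^{\top}(X \beta_t - Y_r)$, so the fit residual $e_t := X \beta_t - Y_r$ obeys $e_{t+1} = (I - \eta X X^{\top}) e_t$ with $e_0 = -Y_r$. The crucial observation is that $e_0 \in \text{col}(U_2)$ and $X X^{\top} = U_1 \Sigma_1^2 U_1^{\top} + U_2 \Sigma_2^2 U_2^{\top}$ preserves $\text{col}(U_2)$, so by induction $e_t$ remains in $\text{col}(U_2)$ for every $t$. Restricted to that subspace, $I - \eta X X^{\top}$ has eigenvalues $\{1 - \eta \lambda_i^2 : \kp+1 \le i \le p\}$.

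With the optimal step size $\eta = 2/(\lambda_{\kp+1}^2 + \lambda_p^2)$, which I expect to be the choice specified in the supplementary description of the GD procedure, the spectral radius of $I - \eta X X^{\top}$ on $\text{col}(U_2)$ equals $r = (\lambda_{\kp+1}^2 - \lambda_p^2)/(\lambda_{\kp+1}^2 + \lambda_p^2) < 1$. Iterating yields $\|e_{t_2}\|^2 \le r^{2 t_2} \|Y_r\|^2 \le r^{2 t_2} \|Y\|^2$, which gives \eqref{licon} with $C = \|Y\|^2$.

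The one place the argument can fail, and therefore the main thing to verify carefully, is the invariance of $\text{col}(U_2)$ under the GD iteration. Without it, the standard convergence rate would be governed by the full condition number $\lambda_1^2 / \lambda_p^2$, yielding the much weaker rate $(\lambda_1^2 - \lambda_p^2)/(\lambda_1^2 + \lambda_p^2)$. The whole point of first peeling off the top-$\kp$ singular component via randomized SVD is to cut the effective condition number from $\lambda_1^2/\lambda_p^2$ to $\lambda_{\kp+1}^2/\lambda_p^2$, and the invariance argument is precisely what transfers that spectral gap into the GD convergence rate.
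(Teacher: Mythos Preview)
Your overall strategy---reduce everything to gradient-descent convergence on a least-squares problem whose effective condition number is $\lambda_{\kp+1}^2/\lambda_p^2$---matches the paper's approach, and the invariance argument is the right engine. However, one step is wrong as written and needs repair.

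You assert that $Y_r=(I-U_1U_1^\top)Y$ lies in $\mathrm{col}(U_2)$ and hence that the optimal residual $\min_\beta\|X\beta-Y_r\|$ is zero. This fails whenever $Y\notin\mathrm{col}(X)$: writing $Y=Y^*+Y^\perp$ with $Y^\perp\perp\mathrm{col}(X)$, one has $Y_r=U_2U_2^\top Y+Y^\perp$, so $e_0=-Y_r$ is \emph{not} in $\mathrm{col}(U_2)$, and in fact the component $Y^\perp$ is fixed by $I-\eta XX^\top$ and never shrinks. Thus $\|e_t\|\ge\|Y^\perp\|$ for all $t$, and the contraction you claim for $e_t$ does not hold.

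The fix is to track the quantity you actually care about. Since $Y^*-\hat Y_{t_2}=U_2U_2^\top Y - X\beta_{r,t_2}=-e_{t_2}-Y^\perp$, set $f_t:=e_t+Y^\perp$. Then $f_0=-U_2U_2^\top Y\in\mathrm{col}(U_2)$, and because $XX^\top Y^\perp=0$ the same recursion $f_{t+1}=(I-\eta XX^\top)f_t$ holds, now genuinely inside $\mathrm{col}(U_2)$. Your spectral-radius computation then gives $\|Y^*-\hat Y_{t_2}\|=\|f_{t_2}\|\le r^{t_2}\|U_2U_2^\top Y\|$, which is the desired bound. Equivalently, run the invariance argument in parameter space: $\beta_r^*=(X^\top X)^{-1}X^\top Y_r\in\mathrm{col}(V_2)$, the GD iterates $\beta_t$ stay in $\mathrm{col}(V_2)$, and $I-\eta X^\top X$ restricted there has spectral radius $r$.
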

The proof is in the supplementary materials due to space limitation.
\begin{remark}
Theorem \ref{thm2} gives some intuition of why \li decompose the projection into two components. In an extreme case if we set $\kp=0$ (i.e. don't remove projection on the top principle components and directly apply GD to the LS problem), $r$ in equation \ref{licon} becomes $\frac{\lambda_{1}^2-\lambda_p^2}{\lambda_{1}^2+\lambda_p^2}$. Usually $\lambda_1$ is much larger than $\lambda_p$, so $r$ is very close to $1$ which makes the error decays slowly. Removing projections on $\kp$ top singular vector will accelerate error decay by making $r$ smaller. The benefit of this trick is easily seen in the experiment section.
\label{rm1}
\end{remark}

\subsection{Fast Algorithm for CCA}
Our fast CCA algorithm \lc are summarized in Algorithm \ref{al4}:

\begin{algorithm}[tb]
\caption{\lc}
\begin{algorithmic}
\STATE
{\bf Input :}  
$\X\in n\times p_1$ ,$\Y \in n\times p_2$: Data matrices. \\
$\kc$: Number of top canonical variables we want to extract. \\
$t_1$: Number of orthogonal iterations.\\
$\kp$: Number of top singular vectors for LING \\
$t_2$: Number of GD iterations for LING \\
\STATE
{\bf Output :}  
$\X_{\kc}\in n\times \kc$, $\Y_{\kc} \in n\times \kc$: Top $\kc$ canonical variables of $\X$ and $\Y$.\\

\STATE 1.Generate a $p_1\times \kc$ dimensional random matrix $\G$ with i.i.d standard normal entries.
\STATE 2.Let $\X_0=\X\G$, $\hat{\X}_0=\text{QR}(\X_0)$

3.\FOR {$t=1$ {\bfseries to} $t_1$}
\STATE $\Y_t={\textbf{LING}}(\hat{\X}_{t-1},\Y,\kp,t_2)$, $\hat{\Y}_t=\text{QR}(\Y_t)$
\STATE $\X_t={\textbf{LING}}(\hat{\Y}_{t},\X,\kp,t_2)$, $\hat{\X}_t=\text{QR}(\X_t)$
\ENDFOR
\STATE 4.$\X_{\kc}=\hat{\X}_{t_1}, \Y_{\kc}=\hat{\Y}_{t_1}$ 
\end{algorithmic}
\label{al4}
\end{algorithm}

There are two main differences between Algorithm \ref{al1} and \ref{al4}. We use \li to solve Least squares approximately for the sake of speed. We also apply QR decomposition on every \li output for numerical stability issues mentioned in \cite{trefethen97}.


\subsection{Error Analysis of \lc}
This section provides mathematical results on how well the output of \lc algorithm approximates the subspace spanned by the top $\kc$ true canonical variables for finite $t_1$ and $t_2$. Note that the asymptotic convergence property of \lc when $t_1,t_2\rightarrow \infty$ has already been stated by {\bf theorem} \ref{tm1}. First we need to define the distances between subspaces as introduced in section 2.6.3 of \cite{golub96}:
\begin{definition}
Assume the matrices are full rank. The distance between the column space of matrix $\W_1\in n\times k$ and $\Z_1\in n\times k$ is defined by 
\begin{equation*}
\text{dist}(\W_1,\Z_1)=\|\Hc_{\W_1}-\Hc_{\Z_1}\|_2
\end{equation*}
where $\Hc_{\W_1}=\W_1(\W_1^{\top}\W_1)^{-1}\W_1^{\top}$, $\Hc_{\Z_1}=\Z_1(\Z_1^{\top}\Z_1)^{-1}\Z_1^{\top}$ are projection matrices.
Here the matrix norm is the spectrum norm. Easy to see $\text{dist}(\W_1,\Z_1)=\text{dist}(\W_1\R_1,\Z_1\R_2)$ for any invertible $k\times k$ matrix $\R_1, \R_2$.
\end{definition}
We continue to use the notation defined in section 2. Recall that $\X\Wx\U_1$ gives the top $\kc$ canonical variables from $\X$. The following theorem bounds the distance between the truth $\X\Wx\U_1$ and $\hat{\X}_{t_1}$, the \lc output after finite iterations.
\begin{theorem}
The distance between subspaces spanned top $\kc$ canonical variables of $\X$ and the subspace returned by \lc is bounded by
\begin{equation*}
\di(\hat{\X}_{t_1},\X\Wx\U_1)\le C_1\left(\frac{d_{\kc+1}}{d_{\kc}}\right)^{2t_1}+C_2\frac{d_\kc^2}{d_\kc^2-d_{\kc+1}^2}r^{2t_2}
\end{equation*}
where $C_1$, $C_2$ are constants. $0<r<1$ is introduced in theorem \ref{thm2}. $t_1$ is the number of power iterations in \lc and $t_2$ is the number of gradient iterations for solving every LS problem.
\label{thm3}
\end{theorem}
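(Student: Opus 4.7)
The plan is to view L-CCA as an inexact orthogonal iteration on the symmetric matrix $\A = \tilde\C_{xy}\tilde\C_{xy}^{\top}$ and split the total error into two pieces: the pure orthogonal-iteration error (which produces the first summand) and the error introduced by replacing each exact projection by a LING approximation (which produces the second). The two contributions are then combined by the triangle inequality.

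I would first carry out the change of variables that underlies Theorem \ref{tm1}. Writing $\X_t = \X\Wx \tilde\alpha_t$, a direct computation shows that in the exact Algorithm \ref{al1} one has $\tilde\alpha_t = \A \tilde\alpha_{t-1}$, where $\A = \tilde\C_{xy}\tilde\C_{xy}^{\top}$ admits the eigendecomposition $\U\D^2\U^{\top}$ by Lemma \ref{lesvd}; the top-$\kc$ invariant subspace of $\A$ is therefore spanned by $\U_1$. Because $\X\Wx$ has orthonormal columns (since $(\X\Wx)^{\top}(\X\Wx) = \Wx\C_{xx}\Wx = I$), the map $M \mapsto \X\Wx M$ is an isometric embedding of subspaces, so $\di(\hat\X_{t_1}, \X\Wx\U_1) = \di(\tilde\alpha_{t_1}, \U_1)$. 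This reduces the problem to a subspace-iteration accuracy bound in $\mathbb{R}^{p_1}$.

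For the exact-arithmetic iteration, I would invoke the classical analysis of orthogonal iteration on a symmetric matrix with an eigen-gap (Theorem 8.2.2 in Golub--Van Loan). Since the eigenvalues of $\A$ are $d_1^2 > d_2^2 > \cdots$, the iterates satisfy $\di(\tilde\alpha_{t_1}^{\text{exact}}, \U_1) \le C_1 (d_{\kc+1}^2/d_\kc^2)^{t_1} = C_1 (d_{\kc+1}/d_\kc)^{2t_1}$, where $C_1$ is bounded in terms of $\|(\U_1^{\top}\C_{xx}^{1/2}\G)^{-1}\|$, which is finite almost surely by the nondegeneracy assumption on $\G$. This yields the first summand directly. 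For the LING-induced perturbation, I model one outer iteration of L-CCA as $\tilde\alpha_t = \text{QR}((\A+\Delta_t)\tilde\alpha_{t-1})$, where $\Delta_t$ collects the error from the two LING calls; Theorem \ref{thm2} applied column-by-column bounds $\|\Delta_t\|$ by a constant multiple of $r^{t_2}$ (so $\|\Delta_t\|^2 \lesssim r^{2t_2}$ in the squared norm appearing in the statement). Unrolling the perturbed recursion gives $\tilde\alpha_{t_1} = \A^{t_1}\tilde\alpha_0 + \sum_{s=1}^{t_1} \A^{t_1-s}\Delta_s(\cdots)$ modulo QR-induced rescalings, which do not alter column spaces. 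A Davis--Kahan / Stewart-type subspace perturbation bound then converts the additive perturbation into a subspace-distance increment of order $\|\Delta_s\|/(d_\kc^2 - d_{\kc+1}^2)$ per step; summing the geometric series $\sum_{s}(d_{\kc+1}^2/d_\kc^2)^{t_1-s}$ produces the prefactor $d_\kc^2/(d_\kc^2 - d_{\kc+1}^2)$, giving the second summand.

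The main obstacle will be the bookkeeping around the per-iteration QR decompositions in Algorithm \ref{al4}: they are nonlinear and require care when propagating the additive LING perturbations $\Delta_t$ through the iteration without the constants blowing up. A secondary but routine technical point is aggregating the column-wise LING error guarantee of Theorem \ref{thm2} into a spectral-norm bound on $\Delta_t$ that enters cleanly into the subspace-perturbation step, and keeping track of the constant absorbed into $C_2$ when converting between the $\|\cdot\|^2$ guarantee of Theorem \ref{thm2} and the $\|\cdot\|_2$ quantity appearing in the definition of $\di$.
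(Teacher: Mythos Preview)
Your proposal is correct and follows essentially the same route the paper takes: reduce to (perturbed) orthogonal iteration on $\A=\tilde\C_{xy}\tilde\C_{xy}^{\top}$ via the isometry $\X\Wx$, obtain the $(d_{\kc+1}/d_\kc)^{2t_1}$ term from the standard power-method bound, and obtain the second term by propagating the per-step \li error through the recursion and summing the resulting geometric series in $(d_{\kc+1}/d_\kc)^2$. The only place to be careful is the exponent on $r$: Theorem~\ref{thm2} gives $\|Y^*-\hat Y_{t_2}\|\le \sqrt{C}\,r^{t_2}$, and $\di(\cdot,\cdot)$ is a spectral norm (not a squared norm), so your parenthetical ``squared norm appearing in the statement'' does not by itself justify $r^{2t_2}$; make sure the bookkeeping that turns the per-step \li error into the stated $r^{2t_2}$ factor (or absorbs the discrepancy into $C_2$) is done explicitly.
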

The proof of theorem \ref{thm3} is in the supplementary materials.

\section{Experiments}
In this section we compare several fast algorithms for computing CCA on large datasets. First let's introduce the algorithms we compared in the experiments.
\begin{itemize}
\item
\rpc:
Instead of running CCA directly on the high dimensional $\X$ $\Y$, \rpc computes CCA only between the top $\kr$ principle components (left singular vector) of $\X$ and $\Y$ where $\kr \ll p_1,p_2$. For large $n,p_1,p_2$, we use randomized algorithm introduced in \cite{tropp11} for computing the top principle components of $\X$ and $\Y$ (see supplementary material for details). The tuning parameter that controls the tradeoff between computational cost and accuracy is $\kr$. When $\kr$ is small \rpc is fast but fails to capture the correlation structure on the bottom principle components of $\X$ and $\Y$. When $\kr$ grows larger the principle components captures more structure in $\X$ $\Y$ space but it takes longer to compute the top principle components. In the experiments we vary $\kr$.
\item
\dc: 
See section \ref{dc} for detailed descriptions. The advantage of \dc is it's extremely fast. In the experiments we iterate 30 times ($t_1=30$) to make sure \dc achieves convergence. As mentioned earlier, when $\C_{xx}$ and $\C_{yy}$ are far from diagonal \dc becomes inaccurate.
\item
\lc:
See Algorithm \ref{al4} for detailed description. We find that the accuracy of \li in every orthogonal iteration is crucial to finding directions with large correlation while a small $t_1$ suffices. So in the experiments we fix $t_1=5$ and vary $t_2$. In both experiments we fix $\kp=100$ so the top $\kp$ singular vectors of $\X, \Y$ and every \li iteration can be computed relatively fast.
\item
\gc: 
A special case of Algorithm \ref{al4} where $\kp$ is set to $0$. I.e. the LS projection in every iteration is computed directly by GD. \gc does not need to compute top singular vectors of $\X$ and $\Y$ as \lc. But by equation \ref{licon} and remark \ref{rm1} GD takes more iterations to converge compared with \li. Comparing \gc and \lc in the experiments illustrates the benefit of removing the top singular vectors in \li  and how this can affect the performance of the CCA algorithm. Same as \lc we fix the number of orthogonal iterations $t_1$ to be 5 and vary $t_2$, the number of gradient iterations for solving LS.
\end{itemize}

\rpc, \lc, \gc are all "asymptotically correct" algorithms in the sense that if we spend infinite CPU time all three algorithms will provide the exact CCA solution while \dc is extremely fast but relies on the assumption that $\X$ $\Y$ both have orthogonal columns. Intuitively, given a fixed CPU time, \rpc  dose an exact search on $\kr$ top principle components of $\X$ and $\Y$. \lc does an exact search on the top $\kp$ principle components ($\kp<\kr$) and an crude search over the other directions. \gc dose a crude search over all the directions. The comparison is in fact testing which strategy is the most effective in finding large correlations over huge datasets.
\begin{remark}
Both \rpc and \gc can be regarded as special cases of \lc. When $t_1$ is large and $t_2$ is $0$, \lc becomes \rpc and when $\kp$ is $0$ \lc becomes \gc.
\end{remark}
In the following experiments we aims at extracting 20 most correlated directions from huge data matrices $\X$ and $\Y$. The output of the above four algorithms are two $n\times 20$ matrices $\X_{\kc}$ and $\Y_{\kc}$ the columns of which contains the most correlated directions. Then a CCA is performed between $\X_{\kc}$ and $\Y_{\kc}$ with matlab built-in CCA function. The canonical correlations between $\X_{\kc}$ and $\Y_{\kc}$ indicates the amount of correlations captured from the the huge $\X$ $\Y$ spaces by above four algorithms. In all the experiments, we vary $\kr$ for \rpc and $t_2$ for \lc and \gc to make sure these three algorithms spends almost the same CPU time (\dc is alway fastest). The 20 canonical correlations between the subspaces returned by the four algorithms are plotted (larger means better).\\

We want to make to additional comments here based on the reviewer's feedback. First, for the two datasets considered in the experiments, classical CCA algorithms like the matlab built in function takes more than an hour while our algorithm is able to get an approximate answer in less than 10 minutes. Second, in the experiments we've been focusing on getting a good fit on the training datasets and the performance is evaluated by the magnitude of correlation captured in sample. To achieve better generalization performance a common trick is to perform regularized CCA \cite{hardoon07} which easily fits into our frame work since it's equivalent to running iterative ridge regression instead of OLS in Algorithm \ref{al1}. Since our goal is to compute a fast and accurate fit, we don't pursue the generalization performance here which is another statistical issue.
\subsection{Penn Tree Bank Word Co-ocurrence}
CCA has already been successfully applied to building a low dimensional word embedding in \cite{dhillon11,dhillon12}. So the first task is a CCA between words and their context. The dataset used is the full Wall Street Journal Part of Penn Tree Bank which consists of $1.17$ million tokens and a vocabulary size of $43k$ \cite{lamar10}. The rows of $\X$ matrix consists the indicator vectors of the current word  and the rows of $\Y$ consists of indicators of the word after. To avoid sample sparsity for $\Y$ we only consider 3000 most frequent words, i.e. we only consider the tokens followed by 3000 most frequent words which is about $1$ million. So $\X$ is of size $1000k\times 43k$ and $\Y$ is of size $1000k\times 3k$ where both $\X$ and $\Y$ are very sparse. Note that every row of $\X$ and $\Y$  only has a single $1$ since they are indicators of words. So in this case $\C_{xx},\C_{yy}$ are diagonal and \dc can compute a very accurate CCA in less than a minute as mentioned in section \ref{dc}. On the other hand, even though this dataset can be solved  efficiently by \dc, it is interesting to look at the behavior of other three algorithms which do not make use of the special structure of this problem and compare them with \dc which can be regarded as the truth in this particular case. For \rpc \lc \gc we try three different parameter set ups shown in table \ref{t1} and the 20 correlations are shown in figure \ref{f1}. Among the three algorithms \lc performs best and gets pretty close to \dc as CPU time increases. \rpc doesn't perform well since a lot correlation structure of word concurrence exist in low frequency words which can't be captured in the top principle components of $\X$ $\Y$. Since the most frequent word occurs $60k$ times and the least frequent words occurs only once, the spectral of $\X$ drops quickly which makes GD converges very slowly. So \gc doesn't perform well either.

\begin{table}[h]
\caption {Parameter Setup for Two Real Datasets}
\vskip 0.15in
\begin{center}

\begin{tabular}{|c|c|c|c|c|c|c|c|c|c|}
\hline
\multicolumn{5}{|c|}{PTB word co-occurrence}&\multicolumn{5}{c|}{URL features}\\
\hline
id & $\kr$ & $t_2$& $t_2$ & CPU&id & $\kr$ & $t_2$ & $t_2$ & CPU\\
& \rpc&\lc&\gc& time&&\rpc&\lc&\gc& time\\
\hline
1 &300 & 7 & 17 & 170 &1& 600 & 4 & 7 & 220\\
2 &500 & 38 & 51 & 460&2 & 600 & 11 & 16 & 175\\
3 &800 & 115 & 127 & 1180&3 & 600 & 13 & 17 & 130\\
\hline
\end{tabular}

\end{center}
\vskip -0.1in
\label{t1}
\end{table}

\begin{figure}[h]
\centering
\includegraphics[width=13cm]{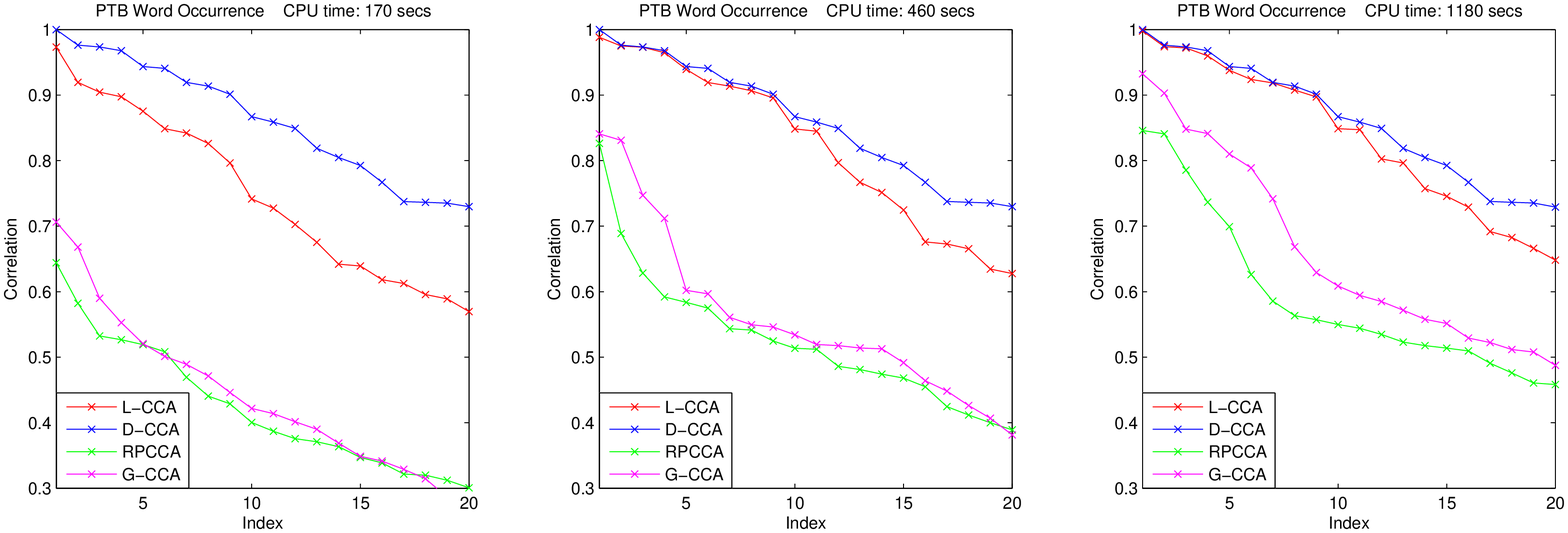}
\caption{PTB word co-ocurrence: Canonical correlations of the 20 directions returned by four algorithms. x axis are the indices and y axis are the correlations.}
\label{f1}
\end{figure}

\subsection{URL Features}
The second dataset is the URL Reputation dataset from UCI machine learning repository. The dataset contains 2.4 million URLs each represented by 3.2 million features. For simplicity we only use first $400k$ URLs. $38\%$ of the features are host based features like WHOIS info, IP prefix and $62\%$ are lexical based features like Hostname and Primary domain. See \cite{ma09} for detailed information about this dataset. Unfortunately the features are anonymous so we pick the first $35\%$ features as our $\X$ and last $35\%$ features as our $\Y$. We remove the 64 continuous features and only use the Boolean features. We sort the features according to their frequency (each feature is a column of $0$s and $1$s, the column with most $1$s are the most frequent feature). We run CCA on three different subsets of $\X$ and $\Y$. In the first experiment we select the $20k$ most frequent features of $\X$ and $\Y$ respectively.  In the second experiment we select $20k$ most frequent features from $\X$ $\Y$ after removing the top $100$ most frequent features of $\X$ and $200$ most frequent features of $\Y$. In the third experiment we remove top $200$ most frequent features from $\X$ and top $400$ most frequent features of $\Y$. So we are doing CCA between two $400k*20k$ data matrices in these experiments. In this dataset the features within $\X$ and $\Y$ has huge correlations, so $\C_{xx}$ and $\C_{yy}$ aren't diagonal anymore. But we still run \dc since it's extremely fast. The parameter set ups for the three subsets are shown in table \ref{t1} and the 20 correlations are shown in figure \ref{f2}.\\ For this dataset the fast \dc doesn't capture largest correlation since the correlation within $\X$ and $\Y$ make $\C_{xx}, \C_{yy}$ not diagonal. \rpc has best performance in experiment 1 but not as good in 2, 3. On the other hand \gc has good performance in experiment 3 but performs poorly in 1, 2. The reason is as follows: In experiment 1 the data matrices are relatively dense since they includes some frequent features. So every gradient iteration in \lc and \gc is slow. Moreover, since there are some high frequency features and most features has very low frequency, the spectrum of the data matrices in experiment 1 are very steep which makes GD in every iteration of \gc converges very slowly. These lead to poor performance of \gc. In experiment 3 since the frequent features are removed data matrices becomes more sparse and has a flat spectrum which is in favor of \gc. \lc has stable and close to best performance despite those variations in the datasets.

\begin{figure}[h]
\centering
\includegraphics[width=13cm]{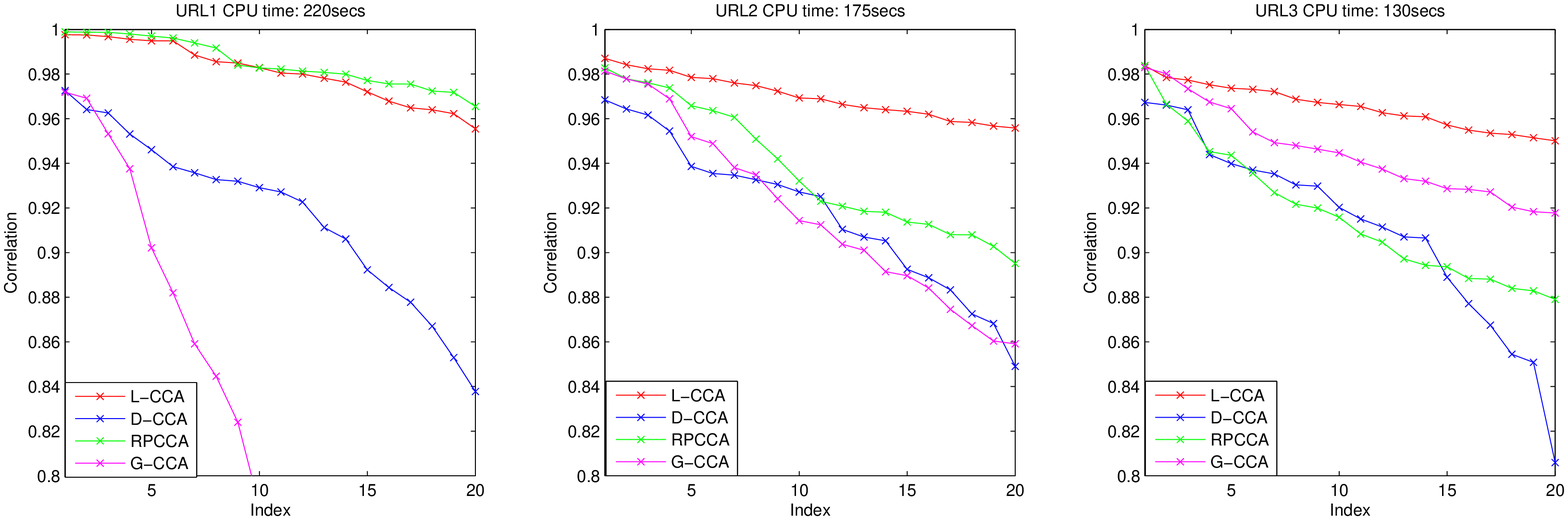}
\caption{URL: Canonical correlations of the 20 directions returned by four algorithms. x axis are the indices and y axis are the correlations.}
\label{f2}
\end{figure}

\section{Conclusion and Future Work}
In this paper we introduce \lc, a fast CCA algorithm for huge sparse data matrices. We construct theoretical bound for the approximation error of \lc comparing with the true CCA solution and implement experiments on two real datasets in which \lc has favorable performance. On the other hand, there are many interesting fast LS algorithms with provable guarantees which can be plugged into the iterative LS formulation of CCA. Moreover, in the experiments we focus on how much correlation is captured by \lc for simplicity. It's also interesting to use \lc for feature generation and evaluate it's performance on specific learning tasks.\\


\bibliography{cca}
\bibliographystyle{plain}

\end{document}